%
%


\documentclass[11pt]{article}
\usepackage{acl2015}
\usepackage{times}
\usepackage{url}
\usepackage{latexsym}
\usepackage{graphicx}
\usepackage{tikz-dependency}
\usepackage{tikz}
\usepackage{tikz-qtree}
\usepackage{amsmath}
\usepackage{amssymb}

\let\OLDthebibliography\thebibliography
\renewcommand\thebibliography[1]{
  \OLDthebibliography{#1}
  \setlength{\parskip}{2pt}
  \setlength{\itemsep}{0pt plus 0.3ex}
}

\usepackage{multirow}
\usepackage{algorithmic}
\usepackage{algorithm}
\usepackage{mathrsfs}
\usepackage{amsthm}
\usepackage{sidecap}
\usepackage{url}
\usepackage{etoolbox}

\AtBeginEnvironment{algorithmic}{\small}

\setlength\floatsep{1\baselineskip plus 3pt minus 2pt}
\setlength\textfloatsep{0.75\baselineskip plus 3pt minus 2pt}
\setlength\intextsep{1\baselineskip plus 3pt minus 2 pt}

\setlength\titlebox{2.3cm}    

\newenvironment{itemizesquish}{\begin{list}{\labelitemi}{\setlength{\itemsep}{0em}\setlength{\labelwidth}{0.5em}\setlength{\leftmargin}{\labelwidth}\addtolength{\leftmargin}{\labelsep}}}{\end{list}}

\newenvironment{enumeratesquish}{\begin{list}{\addtocounter{enumi}{1}\labelenumi}{\setlength{\itemsep}{0em}\setlength{\labelwidth}{0.5em}\setlength{\leftmargin}{\labelwidth}\addtolength{\leftmargin}{\labelsep}}}{\end{list}\setcounter{enumi}{0}}

\newcommand{\citet}[1]{\newcite{#1}}
\newcommand{\citep}[1]{\cite{#1}}

\newcommand{\sett}[1]{\mathcal{#1}}

\newcommand{\vectsymb}[1]{\boldsymbol{#1}}

\newcommand{\emphbf}[1]{{\textbf{#1}}}
\newcommand{\mathsc}[1]{\text{\sc{#1}}}


\newtheorem{theorem}{Theorem}

\newtheorem{proposition}[theorem]{\bf{Proposition}}

\newcommand{\afm}[1]{{\textcolor{red}{\bf [{\sc andre:} #1]}}}
\newcommand{\danny}[1]{{\textcolor{blue}{\bf [{\sc danny:} #1]}}}

\setlength\titlebox{5cm}


\title{Parsing as \underline{R}eduction}

\author{Daniel Fern\'andez-Gonz\'alez$^\dagger$\thanks{ \ \ This research was carried out during an internship at Priberam Labs.}  \hspace{1cm} Andr\'e F. T. Martins$^\ddagger$$^\#$\\
$^\dagger$Departamento de Inform\'atica, Universidade de Vigo, Campus As Lagoas, 32004 Ourense, Spain\\
  $^\ddagger$Instituto de Telecomunica\c{c}\~oes, Instituto Superior T\'ecnico, 1049-001 Lisboa, Portugal \\
 $^\#$Priberam Labs, Alameda D. Afonso Henriques, 41, 2\textordmasculine, 1000-123 Lisboa, Portugal \\
  {\tt danifg@uvigo.es, atm@priberam.pt} } 

\date{}

\begin{document}
\maketitle
\begin{abstract}
We reduce phrase-representation parsing to dependency parsing. 
Our reduction is grounded on
a new intermediate representation,  
``head-ordered dependency trees,'' 
shown 
to be
isomorphic to constituent trees. 
By encoding order information in the dependency labels, we show that any off-the-shelf, trainable dependency parser can be used to produce constituents. 
When this parser is non-projective, we can 
perform discontinuous parsing in a very natural manner.  
Despite the simplicity of our approach, 
experiments show that 
the resulting parsers are 
on par with strong baselines, such as the Berkeley parser for English and the best single system in 
the SPMRL-2014 shared task. 
Results are particularly striking for discontinuous parsing of German, where we surpass the current state of the art by a wide margin. 
\end{abstract}

\section{Introduction}

\emphbf{Constituent parsing} 
is a central problem in NLP---one at which statistical models trained on treebanks have excelled \citep{Charniak1996,Klein2003,Petrov2007NAACL}. 
However, most existing parsers are slow, 
since they need to deal with a heavy grammar constant. 
Dependency parsers are generally faster, but less informative, 
since they do not produce constituents, 
which are often required by downstream applications \citep{Johansson2008b,Wu2009,Kirkpatrick2011,Elming2013}. 
How to get the best of both worlds?


Coarse-to-fine decoding \cite{Charniak2005} 
and shift-reduce parsing \citep{Sagae2005,Zhu2013} 
were a step forward to accelerate constituent parsing,  
but runtimes still lag 
those of dependency parsers.  
This is only made worse if \emphbf{discontinuous} constituents are allowed---such discontinuities are convenient to represent 
wh-movement, scrambling, extraposition, 
and other linguistic phenomena common in free word order languages. 
While 
non-projective dependency parsers, which are able to model such phenomena, have been widely developed 
in 
the last decade \citep{Nivre2007malt,McDonald2006CoNLL,Martins2013ACL}, discontinuous constituent parsing 
is still taking its first steps 
\citep{Maier2008,Kallmeyer2013}. 


In this paper, we show that an off-the-shelf, trainable, \emphbf{dependency parser}  
is enough   
to build a highly-competitive constituent parser. 
This (surprising) result is based on a  
\emphbf{reduction} of constituent 
to 
dependency parsing,  
followed by a simple 
post-processing procedure to recover unaries. 
Unlike other constituent parsers, ours does not 
require estimating a grammar,  
nor binarizing the treebank.  
Moreover, when the dependency parser is non-projective, our method can perform discontinuous constituent parsing in a very natural way. 
\if 0
Conversions between dependencies and constituents 
have been attempted before (we review this 
related work in \S\ref{sec:related_work}). 
The constituency-to-dependency direction is 
easier and has been addressed with 
head percolation tables and simple transformation rules 
\citep{Collins1999,Yamada2003,Marneffe2006}.  
For the opposite direction, 
\citet{Hall2008} 
proposed a strategy similar to ours to parse German; 
however, their approach generates complex and numerous dependency labels, 
affecting the final parser's quality. 
Here, we go back to the basics to provide a formal 
connection between these two formalisms. 
\fi

Key to our approach is the notion of \emphbf{head-ordered 
dependency trees} (shown in Figure~\ref{fig:ctrees_dtrees}): by endowing dependency trees with this additional layer of structure, we show 
that they become isomorphic to constituent trees. 
We encode this structure as part of the 
dependency labels, enabling a dependency-to-constituent conversion. 
\citet{Hall2008} attempted a related conversion 
to parse German, 
but their complex 
encoding scheme blows up 
the number of arc labels, 
affecting 
the final parser's quality. 
By contrast, our light encoding 
achieves
a 10-fold decrease in the number of labels, 
translating into more accurate parsing. 

While simple, our reduction-based parsers 
are on par with the Berkeley parser for English \citep{Petrov2007NAACL}, and with the best single system 
in the recent SPMRL shared task \citep{Seddah2014}, 
for eight morphologically rich languages. 
For discontinuous parsing, we surpass the 
current state of the art by a wide margin on two German datasets (TIGER and NEGRA), 
while achieving fast parsing speeds. 
Our parsers will be released along with this paper as accompanying software. 





%


\begin{figure*}[t]
\centering
\small
\begin{tikzpicture}[level distance=0.9cm]
\tikzset{frontier/.style={distance from root=100pt}}
\Tree [.S [.{NP} [.DT The ] [.\bf{NN} public ] ]
[.\bf{VP} [. \bf{VBZ} is ] [.ADVP [. \bf{RB} still ] ] [.ADJP [. \bf{JJ} cautious ] ] ] [.. . ] ]
\end{tikzpicture}
\qquad\qquad
\includegraphics[width=0.34\textwidth]{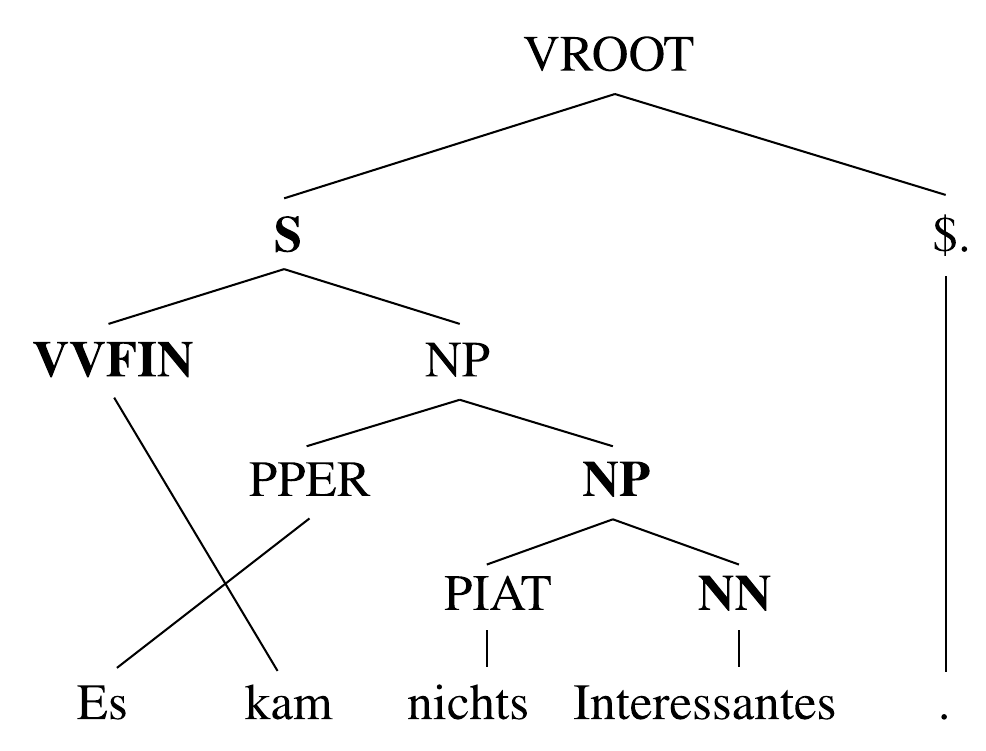}\\
\vspace{.3cm}
\begin{dependency}
\begin{deptext}[column sep=1.3em]
The \& public \& is \& still \& cautious \& . \\
{\small DT} \& {\small NN} \& {\small VBZ} \& {\small RB} \& {\small JJ} \& {\small .} \\
\end{deptext}
\depedge{2}{1}{NP\#1}
\depedge{3}{2}{S\#2}
\depedge{3}{4}{VP\#1}
\depedge{3}{5}{VP\#1}
\depedge{3}{6}{S\#2}
\end{dependency}
\qquad\qquad
\begin{dependency}
\begin{deptext}[column sep=1.3em]
Es \& kam \& nichts \& Interessantes \& . \\
{\small PPER} \& {\small VVFIN} \& {\small PIAT} \& {\small NN} \& {\small \$.} \\
\end{deptext}
\depedge[edge unit distance=2ex]{4}{1}{NP\#2}
\depedge{4}{3}{NP\#1}
\depedge[edge unit distance=4.5ex]{2}{4}{S\#1}
\depedge[edge unit distance=4ex]{2}{5}{VROOT\#1}
\end{dependency}
\caption{Top: a continuous (left) and a discontinuous (right) c-tree, taken from English PTB~\S{22} and 
German NEGRA, respectively. 
Head-child nodes are in bold. 
Bottom: corresponding head-ordered d-trees. 
The indices \#1, \#2, etc. denote the order of attachment events for each head. 
The English unary nodes {\tt ADVP} and {\tt ADJP} are dropped in the conversion.}
\label{fig:ctrees_dtrees}
\end{figure*}

\section{Background}

We start by reviewing constituent and dependency representations, and setting up the notation. 
Following \citet{Kong2014}, we use c-/d- prefixes for convenience (\emph{e.g.}, 
we write c-parser for constituent parser and d-tree for dependency tree). 

\subsection{Constituent Trees}


Constituent-based representations 
are commonly seen as derivations according to a context-free grammar (CFG). Here, 
we focus on properties of the c-trees, rather than of the grammars used to generate them.  
We consider a broad scenario that permits c-trees with 
discontinuities, such as the ones derived with 
linear context-free rewriting systems (LCFRS; \citet{Vijay1987}). 
We also assume that the c-trees are 
lexicalized. 

Formally, let $w_1 w_2 \ldots w_L$ be a sentence, 
where $w_i$ denotes the word in the $i$th position. 
A \emphbf{c-tree}  
is a rooted tree whose leaves are the words 
$\{w_i\}_{i=1}^L$, and whose 
internal nodes (constituents) 
are represented 
as a tuple $\langle Z,h,\sett{I}\rangle$, 
where $Z$ is a non-terminal symbol, 
$h \in \{1,\ldots,L\}$ indicates the lexical head, 
and $\sett{I} \subseteq \{1,\ldots,L\}$ is the node's yield. 
Each word's parent is a pre-terminal unary node of the form 
$\langle p_i,i,\{i\} \rangle$,   
where $p_i$ denotes the word's part-of-speech (POS) tag. 
The yields and lexical heads are defined so that  
for every constituent 
$\langle Z,h,\sett{I}\rangle$ 
with children $\{\langle X_k, m_k, \sett{J}_k \rangle\}_{k=1}^K$, 
(i) we have 
$\sett{I} = \bigcup_{k=1}^K \sett{J}_k$; 
and (ii) there is a unique $k$ 
such that $h=m_k$. 
This $k$th node (called the head-child node) 
is commonly chosen 
applying an handwritten set of head rules 
\citep{Collins1999,Yamada2003}.

A c-tree is \emphbf{continuous} if 
all nodes $\langle Z,h,\sett{I}\rangle$ have a contiguous yield $\sett{I}$, and 
\emphbf{discontinuous} otherwise. 
Trees derived from a CFG are always continuous; 
those derived by a LCFRS may have discontinuities, 
the yield of a node being a union of spans, possibly with gaps in the middle. 
Figure~\ref{fig:ctrees_dtrees} shows an example of a continuous and a discontinuous c-tree. 
Discontinuous c-trees have crossing branches, 
if the leaves are drawn in left-to-right 
surface order. 
An internal node which is not a pre-terminal is called a \emphbf{proper node}. 
A node is called unary if it has exactly one child. 
A c-tree without unary proper nodes is called \emphbf{unaryless}. 
If all proper nodes have exactly two children then it is called a \emphbf{binary} c-tree.
Continuous binary trees may be regarded as having been generated by a CFG in Chomsky normal form. 


\paragraph{Prior work.} 
There has been a long string of work in statistical c-parsing, 
shifting from simple models \citep{Charniak1996} 
to more sophisticated ones 
using 
structural annotation \citep{Johnson1998,Klein2003},
latent grammars \citep{Matsuzaki2005,Petrov2007NAACL}, 
and lexicalization \citep{Eisner1996,Collins1999}. 
An orthogonal line of work uses ensemble or reranking strategies 
to further improve accuracy \citep{Charniak2005,Huang2008,Bjorkelund2014}.  
Discontinuous c-parsing is considered a much harder problem, involving mildly context-sensitive formalisms 
such as LCFRS or range concatenation grammars, 
with 
treebank-derived c-parsers 
exhibiting near-exponential runtime  \cite[Figure~27]{Kallmeyer2013}. 
To speed up decoding, prior work has considered restrictons,   
such as 
bounding the fan-out  \citep{Maier2012} and requiring well-nestedness \citep{Kuhlmann2006,Rodriguez2010}.  
Other approaches eliminate the discontinuities via tree transformations \citep{Boyd2007,Kubler2008}, 
sometimes as a pruning step followed by reranking \citep{Cranenburgh2013}.   
However, reported runtimes are still superior to 10 seconds per sentence, which is not practical. 
Recently, \citet{Versley2014SPMRL} proposed an easy-first approach 
that leads to considerable speed-ups, 
but is less accurate. 
In this paper, we design fast discontinuous c-parsers that outperform all the ones above by a wide margin, with similar runtimes as \citet{Versley2014SPMRL}. 

\subsection{Dependency Trees}

In this paper, we use d-parsers as a black box to parse constituents. 
Given a sentence $w_1 \ldots w_L$, 
a \emphbf{d-tree} is 
a directed tree
spanning all the words in the sentence.%
\footnote{We assume throughout that dependency trees have a single root among $\{w_1,\ldots,w_L\}$. Therefore, there is no need to consider an extra root symbol, as often done in the literature.} %
Each arc in this tree 
is a tuple $\langle h, m, \ell \rangle$, 
expressing a typed dependency relation $\ell$ 
between the head word $w_h$ and the modifier $w_m$. 

A d-tree is  
\emphbf{projective} if for every arc $\langle h, m, \ell \rangle$  
there is a directed path from $h$ to all words that lie between $h$ and $m$ in the surface string \citep{Kahane1998}. 
Projective d-trees can be obtained from 
continuous c-trees by reading off the lexical heads and dropping the internal nodes \citep{Gaifman1965}. 
However, this relation is many-to-one: as shown in Figure~\ref{fig:ctrees_ambiguity}, several c-trees 
may project onto the same d-tree, differing on their flatness and on  
left or right-branching decisions. 
In the next section, we introduce the concept of 
head-ordered d-trees and express one-to-one mappings between 
these two representations.

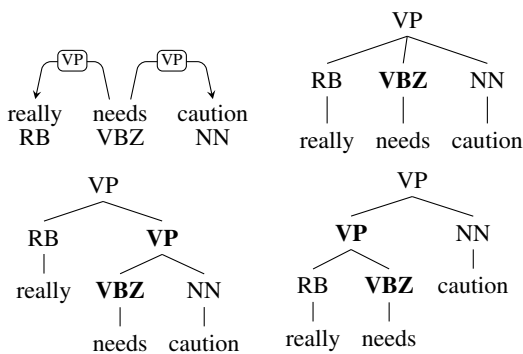
\begin{figure}[t]
\small
\centering
\begin{dependency}
\begin{deptext}[column sep=0.8em]
really \& needs \& caution \\
{\small RB} \& {\small VBZ} \& {\small NN} \\
\end{deptext}
\depedge{2}{1}{VP}
\depedge{2}{3}{VP}
\end{dependency}
\quad
\begin{tikzpicture}[level distance=0.8cm]
\tikzset{frontier/.style={distance from root=40pt}}
\Tree [.VP [. RB really ] [. \bf{VBZ} needs ]  [. NN caution ] ]
\end{tikzpicture}
\begin{tikzpicture}[level distance=0.7cm]
\tikzset{frontier/.style={distance from root=57pt}}
\Tree [.VP [. RB really ] [. \bf{VP} [. \bf{VBZ} needs ] [. NN caution ] ] ] 
\end{tikzpicture}
\quad
\begin{tikzpicture}[level distance=0.7cm]
\tikzset{frontier/.style={distance from root=57pt}}
\Tree [.VP [. \bf{VP} [. RB really ] [. \bf{VBZ} needs ] ] [. NN caution ] ] 
\end{tikzpicture}
\caption{Three different c-structures for the VP ``\emph{really needs caution}.'' All are consistent with the d-structure at the top left.}
\label{fig:ctrees_ambiguity}
\end{figure}

\paragraph{Prior work.} 
There has been a considerable amount of work developing  rich-feature d-parsers. 
While projective d-parsers can use dynamic programming \citep{Eisner1999,Koo2010}, 
non-projective d-parsers typically rely on approximate decoders, since 
the underlying problem is NP-hard beyond 
arc-factored models \citep{McDonald2007}. 
An alternative are transition-based d-parsers \citep{Nivre2006CoNLL,Zhang2011}, 
which achieve observed linear time. 
Since d-parsing algorithms do not have a grammar constant, typical implementations are significantly faster than c-parsers \citep{Rush2012,Martins2013ACL}. 
The key contribution of this paper is to reduce c-parsing to 
d-parsing, allowing to bring these runtimes closer.

\section{Head-Ordered Dependency Trees}

We next endow d-trees with another layer of structure, namely \emphbf{order information}. 
In this framework, not all modifiers of a head are ``born equal.'' 
Instead, their attachment to the head occurs as a sequence of ``events,'' which reflect the head's preference for attaching some modifiers before others. 
As we will see, this additional structure will undo the ambiguity expressed in Figure~\ref{fig:ctrees_ambiguity}.

\subsection{Strictly Ordered Dependency Trees}\label{sec:strictly_ordered_dtree}

Let us start with 
the simpler case where the attachment order is strict.  
For each head word $h$ with modifiers $M_h = \{m_1,\ldots,m_K\}$, 
we endow $M_h$ with a \emphbf{strict order relation} $\prec_h$, 
so we can 
organize all 
the modifiers of $h$ 
as a chain, 
$m_{i_1} \prec_h m_{i_2} \prec_h \ldots \prec_h m_{i_K}$.  
We regard 
this chain as reflecting the order by 
which words are attached (\emph{i.e.}, if $m_i \prec_h m_j$ this means that 
``$m_i$ is attached to $h$ before $m_j$''). 
We represent this graphically by 
decorating d-arcs with indices ($\#1, \#2, \ldots$) to denote the order of events, 
as we do in Figure~\ref{fig:ctrees_dtrees}. 

\if 0
We say that the strict order $\prec_h$ has the \emphbf{nesting property} iff 
closer words in the same direction are always attached first, \emph{i.e.}, iff 
$h < m_i < m_j$ or $h > m_i > m_j$ implies 
$m_i \prec_h m_j$. 
A projective dependency tree endowed with a nested strict order for each head is called a ``nested-strictly ordered projective dependency tree'' (which we abbreviate to \emphbf{NSOP d-tree}). 

Next, we show that NSOP d-trees are in a one-to-one correspondence with 
binary continuous constituency trees. 
The intuition is simple: if $\prec_h$ has the nesting property, 
then, at each point in time, all the head needs to decide about the next event is whether to attach the closest available modifier on the \emph{left} or on the \emph{right}. 
This is similar to choosing between left-branching or right-branching in a 
constituency tree. 
\fi

A d-tree endowed with a strict order for each head is called a \emphbf{strictly ordered d-tree}.  
We establish below a correspondence between 
strictly ordered d-trees and 
binary c-trees. 
Before doing so, we need a few more definitions about c-trees. 
For each word position $h \in \{1,\ldots,L\}$, we 
define $\psi(h)$ as the node higher in the c-tree whose lexical head is 
$h$. We call the path from $\psi(h)$ down to the pre-terminal $p_h$ the \emphbf{spine} of $h$. We may regard a c-tree as a set of $L$ spines, one per word, 
which attach to each other to form a tree \citep{Carreras2008}. 
%
%
\begin{figure}[t]
\centering
\includegraphics[width=0.9\columnwidth]{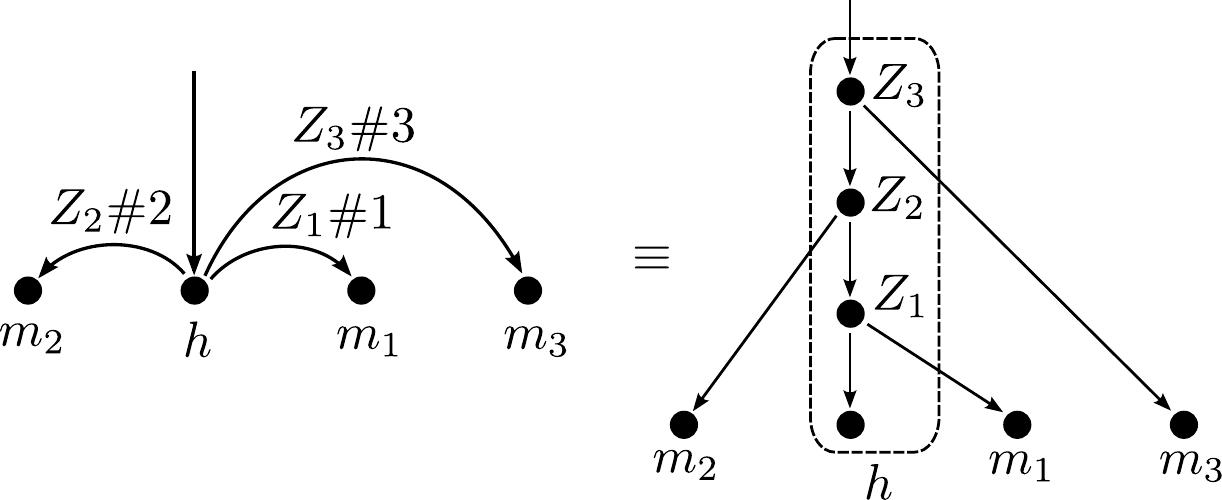} 
\caption{Transformation of a strictly-ordered d-tree into a binary c-tree.}
\label{fig:dtree_ctree_conversion}
\end{figure}
We then have the following
\begin{proposition}\label{prop:strictorder}
Binary c-trees and strictly-ordered d-trees are isomorphic, i.e., there is a one-to-one correspondence between the two sets,  
where the number of symbols is preserved. 
\end{proposition}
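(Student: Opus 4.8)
The plan is to exhibit explicit maps in both directions and verify they are mutual inverses, preserving the number of symbols. First I would define the forward map, from a binary c-tree $T$ to a strictly-ordered d-tree. Given $T$, read off the d-tree in the usual way (Gaifman/Collins-style): for each proper node $\langle Z,h,\sett{I}\rangle$ with children $\{\langle X_k,m_k,\sett{J}_k\rangle\}$, the unique head-child is the one with $m_k = h$, and we create a d-arc from $h$ to the lexical head of every non-head child. Since $T$ is binary, every proper node has exactly one non-head child, so each proper node contributes exactly one d-arc. Now walk down the spine of $h$: the proper nodes on this spine, ordered from $\psi(h)$ down to the pre-terminal $p_h$, are exactly the attachment events for $h$; the modifier attached at the node \emph{closest} to the pre-terminal gets index $\#1$, the next one up gets $\#2$, and so on, defining $\prec_h$. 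The d-arc produced at that spine node inherits the non-terminal symbol $Z$ of the node as (part of) its label, together with the order index. This gives a strictly-ordered d-tree with exactly as many arc-symbols as there were proper-node symbols in $T$.

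Next I would define the inverse map, from a strictly-ordered d-tree $D$ back to a binary c-tree, essentially the construction sketched in Figure~\ref{fig:dtree_ctree_conversion}. For each head $h$ with ordered modifiers $m_{i_1}\prec_h\cdots\prec_h m_{i_K}$, build the spine of $h$ as a chain of $K$ proper nodes stacked above the pre-terminal $p_h$: the lowest proper node has $p_h$ and the spine-top of $m_{i_1}$ as its two children, the next one has that node and the spine-top of $m_{i_2}$ as children, and so on, so that $\psi(h)$ is the topmost node of the chain. Each such proper node is labeled by the non-terminal carried on the corresponding d-arc label, and its lexical head is $h$ (the head-child being the lower node of the pair, by construction), and its yield is the union of the yields of its two children. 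Attaching these spines to each other according to the arcs of $D$ — the spine-top of a word $m$ plugs into the proper node on its head's spine that corresponds to the event at which $m$ was attached — yields a rooted tree whose leaves are $w_1\ldots w_L$; it is binary because every proper node was built with exactly two children; and it reproduces the correct yields and head assignments by induction up the tree.

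Then I would check the two compositions are the identity. Starting from a binary c-tree, reading off $D$ and rebuilding reconstructs the same spines (the order index exactly records where on the spine each modifier attached) with the same labels, hence the same tree; starting from $D$, building $T$ and reading back recovers the same arcs, the same head orders (the spine position is determined by the index), and the same labels. The symbol count is preserved in both directions by the bookkeeping above: $k$ proper nodes with $k$ non-terminal symbols correspond to $k$ labeled d-arcs carrying those same $k$ non-terminals, and the pre-terminals/POS tags and the words are untouched. I should also remark that the construction handles discontinuity transparently: yields are taken as set unions, never assumed contiguous, so crossing branches in $T$ correspond to non-projectivity in $D$ with no change to the argument.

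The main obstacle I expect is not the combinatorics of the bijection but pinning down the well-definedness and corner cases: one must verify that in a binary c-tree the spine of $h$ consists \emph{only} of proper nodes each of which has $h$ as head and exactly one non-head child (so the event sequence is unambiguous), that $\psi(h)$ is well-defined (the highest node with head $h$), and that the root of the whole c-tree is the spine-top of the unique root word of $D$. There is also a small subtlety about pre-terminals: the d-tree carries no symbol for the unary pre-terminal node $\langle p_h,h,\{h\}\rangle$ itself, but the POS tag $p_h$ is data attached to the word, not a ``proper-node symbol,'' so it sits outside the count and the isomorphism is between proper-node-labeled binary c-trees and strictly-ordered d-trees — this is exactly why Proposition~\ref{prop:strictorder} is stated for binary (hence in particular unaryless at the proper level) c-trees, and it is the hinge that later lets the reduction pipe through an off-the-shelf labeled d-parser.
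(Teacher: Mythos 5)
Your proposal is correct and follows essentially the same route as the paper's proof: the spine-as-linked-list construction (first-attached modifier at the lowest proper node, index recording spine position, arc label carrying the node's non-terminal) is exactly the construction of Figure~\ref{fig:dtree_ctree_conversion}, and your inverse-composition and symbol-count checks match the paper's (briefer) argument. The extra care you take with corner cases (well-definedness of $\psi(h)$, the status of pre-terminals, the root) is a sound elaboration rather than a departure.
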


\begin{proof}
We use the construction in Figure~\ref{fig:dtree_ctree_conversion}. 
We will show that, given an arbitrary strictly-ordered d-tree $\sett{D}$, 
we can perform an invertible transformation to turn it into a binary c-tree $\sett{C}$; 
and vice-versa.  
Let $\sett{D}$ be given. 
We visit each node $h\in \{1,\ldots,L\}$ and split it into $K+1$ nodes, where $K=|M_h|$, organized as 
a linked list, as 
Figure~\ref{fig:dtree_ctree_conversion} illustrates  
(this will become the spine of $h$ in the c-tree). 
For each modifier $m_k \in M_h$ with $m_1 \prec_h \ldots \prec_h m_K$, move the tail of the arc $\langle h,m_k,Z_k\rangle$ to the $(K+1-k)$th node of the linked list and assign the label $Z_k$ to this node, letting $h$ be its lexical head. 
Since the incoming and outgoing arcs of the linked list component are the same as in the original node $h$, the tree structure is preserved.  
After doing this for every $h$, add the leaves 
and propagate the yields bottom up. 
It is straightforward to show that this procedure yields a valid binary c-tree. Since there is no loss of information (the orders $\prec_h$ are implied by the order of the nodes in each spine), this construction can be inverted to recover the original d-tree. 
Conversely, if we start with a binary c-tree, traverse the spine of each $h$, and attach the modifiers $m_1 \prec_h \ldots \prec_h m_K$ in order, we get a strictly ordered d-tree (also an invertible procedure).
\end{proof}

\subsection{Weakly Ordered Dependency Trees}\label{sec:weakly_ordered_dtree}

Next, we relax the strict order assumption, restricting the modifier sets 
$M_h = \{m_1,\ldots,m_K\}$ to be only \emphbf{weakly ordered}. 
This means that we can partition the $K$ modifiers into $J$ 
equivalence classes,  
$M_h = \bigcup_{j=1}^J \bar{M}_h^j$, 
and define a strict order $\prec_h$ on the 
quotient set: $\bar{M}_h^1 \prec_h \ldots \prec_h \bar{M}_h^J$. 
Intuitively, there is still a sequence of events 
($1$ to $J$), but now at each event $j$ it may happen that multiple 
modifiers (the ones in the equivalence set $\bar{M}_h^j$)
are simultaneously attached to $h$. 
A \emphbf{weakly ordered d-tree} is a d-tree endowed with a 
weak order for each head 
and such that any pair $m,m'$ in the same equivalence class (written $m \equiv_h m'$) receive the same  
dependency label $\ell$. 

We now show that Proposition~\ref{prop:strictorder} can be generalized 
to weakly ordered d-trees. 
\begin{proposition}\label{prop:weakorder}
Unaryless c-trees %
and weakly-ordered d-trees are isomorphic. 
\end{proposition}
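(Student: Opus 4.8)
The plan is to recycle the construction of Proposition~\ref{prop:strictorder} (Figure~\ref{fig:dtree_ctree_conversion}), replacing ``one spine node per modifier'' by ``one spine node per equivalence class.'' Given a weakly-ordered d-tree $\sett{D}$, with each modifier set partitioned as $M_h = \bigcup_{j=1}^{J} \bar{M}_h^j$ and $\bar{M}_h^1 \prec_h \ldots \prec_h \bar{M}_h^{J}$, I would visit each head $h$ and split it into a linked list of $J+1$ nodes (its future spine). For the $j$th event, move the tails of \emph{all} arcs $\langle h, m, \ell_j \rangle$ with $m \in \bar{M}_h^j$ to the $(J+1-j)$th node of the list, and assign to that node the dependency label $\ell_j$, which by the definition of a weakly-ordered d-tree is shared by every member of $\bar{M}_h^j$; let $h$ be that node's lexical head, and let the bottom node become the pre-terminal $p_h$. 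Then add the leaves and propagate yields bottom-up by unions, exactly as before. One must then check that the result $\sett{C}$ is a valid c-tree: the incoming and outgoing arcs of each linked list are inherited from the original node $h$, so tree structure and lexical heads are preserved; and $\sett{C}$ is \emphbf{unaryless} because every proper spine node has, as children, its spine continuation together with at least one modifier subtree, hence at least two children, so no proper node is unary (pre-terminals, which remain unary, are allowed). The shared-label condition is precisely what gives each spine node a well-defined single non-terminal, so the map is well-defined.

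Conversely, starting from an arbitrary unaryless c-tree and regarding it as a union of spines (as recalled in Section~\ref{sec:strictly_ordered_dtree}), I would for each word $h$ walk its spine $\psi(h) = v_1, v_2, \ldots, v_{J}, p_h$; at each proper node $v_j$, all children other than the spine continuation have lexical heads that are modifiers of $h$ in the projected d-arc set. I declare these modifiers to form one equivalence class, give each of them the non-terminal symbol of $v_j$ as its dependency label, and order the classes by depth along the spine (so the topmost event is attached last, matching the index reversal above). Because every modifier in a class is assigned the same label, this yields a bona fide weakly-ordered d-tree. The two constructions are mutually inverse: the partition and order of $M_h$ recorded in $\sett{D}$ is read off from the order of the nodes in the spine of $h$ in $\sett{C}$, and vice versa, with the non-terminals of $\sett{C}$ in bijection with the labeled attachment events of $\sett{D}$; hence the correspondence preserves the symbol inventory, just as in Proposition~\ref{prop:strictorder}. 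Neither direction uses continuity of the c-tree or projectivity of the d-tree — the yield propagation is simply a union of (possibly gappy) spans — so the isomorphism automatically covers the discontinuous/non-projective case.

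The main obstacle is verifying that the shared-label requirement in the definition of weakly-ordered d-trees is both necessary and sufficient here: without it a single spine node would be forced to carry several non-terminals at once, giving an ill-formed c-tree, whereas with it the inverse map is completely determined. A secondary point that needs care — though it is identical to the argument already used for Proposition~\ref{prop:strictorder} — is the bookkeeping that distinct words contribute vertex-disjoint spines whose union is the entire node set, so that the local surgery performed at each head assembles into one globally consistent tree.
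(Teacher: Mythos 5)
Your proposal is correct and follows essentially the same route as the paper: it reuses the linked-list/spine construction of Proposition~\ref{prop:strictorder}, merging the spine positions of modifiers in the same equivalence class (which the shared-label condition makes well-defined), and inverts it by reading one equivalence class off each proper node of a spine. The extra details you supply --- that nonempty classes make every proper spine node have at least two children (hence unarylessness), and that unary proper nodes are exactly what the inverse map cannot recover --- are the same observations the paper makes, just spelled out more explicitly.
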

\begin{proof}
This is a simple extension of Proposition~\ref{prop:strictorder}. The construction is the same as in Figure~\ref{fig:dtree_ctree_conversion}, but now we can collapse some 
of the nodes in the linked list, originating more than one modifier attaching to the same position of the spine---this is only possible for sibling arcs with the same index and the same arc label. 
Note, however, that if we started with a c-tree with unary nodes and tried to invert this procedure to obtain a d-tree, the unary nodes would be lost, since they do not involve attachment of modifiers. In a chain of unary nodes, only the last node would be recovered when inverting this transformation. 
\end{proof}
We emphasize that Propositions~\ref{prop:strictorder}--\ref{prop:weakorder} hold 
without blowing up the number of symbols. That is, the dependency label alphabet 
is exactly the same as the set of phrasal symbols in the constituent representations. 
%
%
Algorithms~\ref{alg:ctree_to_dtree_conversion}--\ref{alg:dtree_to_ctree_conversion} 
convert back and forth between the two 
formalisms, performing the construction of 
Figure~\ref{fig:dtree_ctree_conversion}. 
Both algorithms run in linear time 
with respect to the size of the sentence. 

\if 0
We complement these isomorphism results 
by exhibiting the algorithms  (Algorithms~\ref{alg:ctree_to_dtree_conversion}--\ref{alg:dtree_to_ctree_conversion}) 
which convert back and forth between the two 
formalisms, performing the construction shown in 
Figure~\ref{fig:dtree_ctree_conversion}. 
Both algorithms run in linear time 
with respect to the size of the sentence. 
\fi

\if 0
\afm{maybe delete these descriptions and just point to the algos?}
\paragraph{Conversion from c-tree to d-tree.} %
Algorithm~\ref{alg:ctree_to_dtree_conversion} converts a  unaryless c-tree to a head-ordered d-tree.%
\footnote{Note that Algorithm~\ref{alg:ctree_to_dtree_conversion} can also be applied to a c-tree which is not unaryless---the effect is that those unary nodes will not be encoded in the resulting d-tree. See the English example in  Figure~\ref{fig:ctrees_dtrees}.} %
The procedure is the usual way in which a lexicalized c-tree is turned into a d-tree, the only difference being tracking the order by which modifiers are attached. 
The c-tree is processed in a bottom-up fashion (line~\ref{alg:ctree_to_dtree_conversion_postorder}), 
and at each step we keep track about 
how far in the spine of each word $h$ we have gone (variable $j(h)$). 
For each c-node that is reached, with label $Z$ and lexical head $h$, we then 
obtain all children's lexical heads $m$ that come from 
other spines, and create new d-arcs by attaching each $m$ to $h$ with the label $Z$ 
and using as event index the current position in the spine, $j(h)$.

\paragraph{Conversion from d-tree to c-tree.} %
Algorithm~\ref{alg:dtree_to_ctree_conversion} converts a head-ordered d-tree to a unaryless c-tree. 
It processes dependency arcs in a bottom-up fashion (line~\ref{alg:dtree_to_ctree_conversion_postorder}), 
iteratively creating new c-nodes.
Along the way, for each word $h$, it keeps track of the highest c-node 
that has $h$ as head, storing this node in variable $\psi(h)$; 
the base case (line~\ref{alg:dtree_to_ctree_conversion_basecase}) 
is the pre-terminal c-node $\langle p_h, h, \{h\} \rangle$. 
Then, it processes each modifier in $M_h$ in sequence, by the order 
defined by $\prec_h$. 
In the $j$th equivalence class, when the arcs $\langle h, m_j, Z \rangle \in \sett{D}$ are processed, 
the current node $\psi(h)$ and each $\psi(m_j)$ are examined, and 
a non-terminal c-node is created with label $Z$, 
having $\psi(h)$ and each $\psi(m_j)$ as children (lines~\ref{alg:dtree_to_ctree_conversion_cnode_creation}--%
\ref{alg:dtree_to_ctree_conversion_cnode_children}); afterwards 
$\psi(h)$ is updated to this c-node. 
\fi

\begin{algorithm}[t]
\small{%
   \caption{Conversion from c-tree to d-tree} \label{alg:ctree_to_dtree_conversion}
\begin{algorithmic}[1]
\REQUIRE c-tree  $\sett{C}$.
\ENSURE head-ordered d-tree $\sett{D}$. 
   \STATE ${\mathrm{Nodes}} := \mathsc{GetPostOrderTraversal}(\sett{C})$.\label{alg:ctree_to_dtree_conversion_postorder}
   \STATE Set $j(h) := 1$ for every $h = 1,\ldots,L$. 
   \FOR{$v := \langle Z, h, \sett{I} \rangle \in \mathrm{Nodes}$}
     \FOR{every $u := \langle X, m, \sett{J}\rangle$ which is a child of $v$}
       \IF{$m \ne h$}
         \STATE Add to $\sett{D}$ an arc $\langle h, m, Z \rangle$, and put it in 
         $\bar{M}_h^{j(h)}$.\label{alg:ctree_to_dtree_conversion_arc_creation}
       \ENDIF
     \ENDFOR
     \STATE Set $j(h) := j(h)+1$.
   \ENDFOR
\end{algorithmic}}
\end{algorithm}

\begin{algorithm}[t]
\small{%
   \caption{Conversion from d-tree to c-tree} \label{alg:dtree_to_ctree_conversion}
\begin{algorithmic}[1]
\REQUIRE head-ordered d-tree  $\sett{D}$.
\ENSURE c-tree $\sett{C}$. 
   \STATE ${\mathrm{Nodes}} := \mathsc{GetPostOrderTraversal}(\sett{D})$.\label{alg:dtree_to_ctree_conversion_postorder}
   \FOR{$h \in \mathrm{Nodes}$}
     \STATE 
     Create 
     $v:=\langle p_h, h, \{h\}\rangle$ 
     and set $\psi(h) := v$.\label{alg:dtree_to_ctree_conversion_basecase}
     \STATE Sort $M_h(\sett{D})$, 
     yielding 
     $\bar{M}_h^1\prec_h \bar{M}_h^2\prec_h \ldots \prec_h \bar{M}_h^J$.
     \FOR{$j=1,\ldots,J$}
       \STATE Let $Z$ be the label in 
       $\{\langle h,m,Z \rangle\,\,|\,\, m \in \bar{M}_h^j\}$.
       \STATE Obtain c-nodes 
       $\psi(h) = \langle X,h,\sett{I}\rangle$ and $\psi(m) = \langle Y_m,m,\sett{J}_m\rangle$ for all $m \in  \bar{M}_h^j$.
       \STATE Add c-node $v := \langle Z, h, \sett{I} \cup \bigcup_{m \in  \bar{M}_h^j} \sett{J}_m\rangle$ 
       to $\sett{C}$.\label{alg:dtree_to_ctree_conversion_cnode_creation}
       \STATE Set $\psi(h)$ and 
       $\{\psi(m) \,|\,m \in \bar{M}_h^j\}$ as children of $v$.\label{alg:dtree_to_ctree_conversion_cnode_children}
       \STATE Set $\psi(h) := v$.
     \ENDFOR
   \ENDFOR
\end{algorithmic}}
\end{algorithm}

\subsection{Continuous and Projective Trees} 

What about the more restricted class of projective d-trees? Can we find an equivalence relation with continuous c-trees? 
In this section, we give a precise answer to this question. 
It turns out that we need an additional property, 
illustrated in Figure~\ref{fig:nonproj_nonnested_discontinuous}.  

We say that $\prec_h$ has the \emphbf{nesting property} iff  
closer words in the same direction are always attached first, 
\emph{i.e.}, iff
$h < m_i < m_j$ or $h > m_i > m_j$ implies 
that either $m_i \equiv_{h} m_j$ 
or $m_i \prec_h m_j$. 
A weakly-ordered d-tree which is projective and whose orders $\prec_h$ have the nesting property for every $h$ 
is called a \emphbf{nested-weakly ordered projective d-tree}. 
\begin{figure}[t]
\centering
\includegraphics[width=0.9\columnwidth]{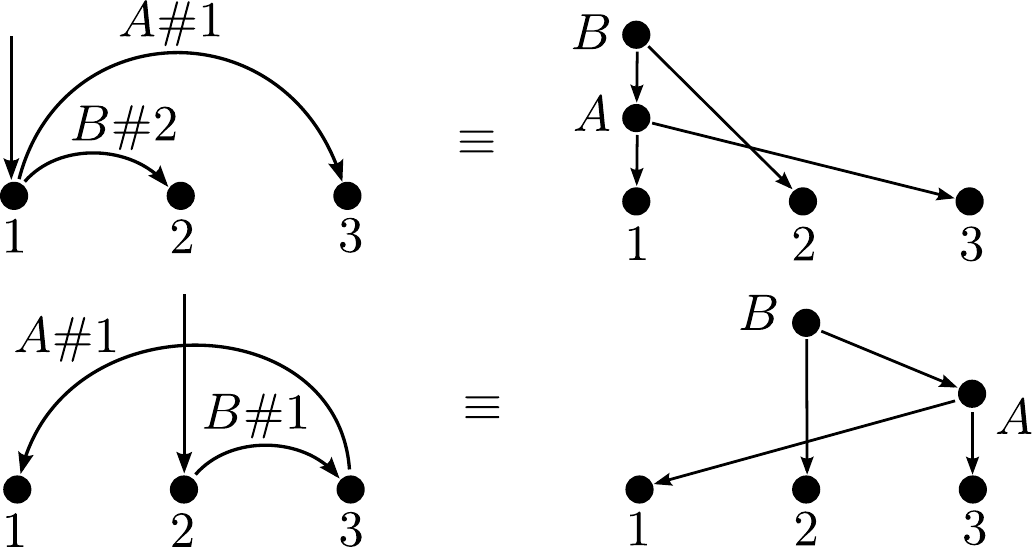} 
\caption{Two discontinuous constructions caused by a 
non-nested order (top) and a non-projective d-tree (bottom). In both cases node $A$ has a non-contiguous yield.}
\label{fig:nonproj_nonnested_discontinuous}
\end{figure}
We then have the following result. 

\begin{proposition}\label{prop:continuous_case}
Continuous unaryless c-trees  
and nested-weakly ordered projective d-trees are isomorphic. 
\end{proposition}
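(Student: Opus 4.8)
The strategy is to build on Proposition~\ref{prop:weakorder}, which already gives a bijection between unaryless c-trees and weakly-ordered d-trees via the construction of Figure~\ref{fig:dtree_ctree_conversion}. It therefore suffices to show that, under this bijection, the subclass of \emph{continuous} c-trees corresponds exactly to the subclass of d-trees that are both \emph{projective} and \emph{nested}. Since the correspondence is already known to be invertible and symbol-preserving, the whole argument reduces to checking that the two extra side conditions match up: continuity on the c-side $\iff$ projectivity $+$ nesting on the d-side. I would prove the two implications separately.

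\textbf{From continuous c-trees to nested projective d-trees.} Suppose $\sett{C}$ is continuous and let $\sett{D}$ be the associated d-tree. First, projectivity: recall (as stated in the Background section, following \citet{Gaifman1965}) that reading lexical heads off a continuous c-tree always yields a projective d-tree; the head-order layer does not affect the underlying arcs, so $\sett{D}$ is projective. Second, nesting: consider a head $h$ with two modifiers $m_i, m_j$ on the same side, say $h < m_i < m_j$. In the spine construction, the arc to $m_j$ attaches at a spine node whose yield already contains the yield of the node where $m_i$ attaches (spine nodes higher up have larger yields). If $m_j$ were attached strictly before $m_i$ (i.e.\ $m_j \prec_h m_i$ and not $m_i \equiv_h m_j$), then the spine node carrying $m_j$ would sit \emph{below} the one carrying $m_i$, so its yield would include the yield of $m_j$ but exclude both $m_i$ and the yield hanging off $m_i$; since $m_i$ lies between $h$ and $m_j$ in surface order, that yield would have a gap, contradicting continuity of $\sett{C}$. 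Hence $m_i \equiv_h m_j$ or $m_i \prec_h m_j$, which is exactly the nesting property.

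\textbf{From nested projective d-trees to continuous c-trees.} Conversely, assume $\sett{D}$ is projective and every $\prec_h$ nests; let $\sett{C}$ be the c-tree produced by Algorithm~\ref{alg:dtree_to_ctree_conversion}. I would argue by induction on the bottom-up construction that every c-node $\langle Z, h, \sett{I}\rangle$ has a contiguous yield $\sett{I}$. The base case (pre-terminals $\langle p_h, h, \{h\}\rangle$) is trivial. For the inductive step, when we form a new spine node by attaching the equivalence class $\bar M_h^j$ to the current $\psi(h)$, the new yield is $\sett{I}_{\psi(h)} \cup \bigcup_{m \in \bar M_h^j} \sett{I}_{\psi(m)}$. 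By the inductive hypothesis each of these pieces is an interval; I must show their union is again an interval. Projectivity of $\sett{D}$ guarantees that the subtree rooted at each $m$ spans a contiguous block of positions and, together with $h$, the blocks $\{h\} \cup \sett{I}_{\psi(m)}$ interleave without crossing; the nesting property guarantees that the modifiers \emph{not yet} attached at stage $j$ are precisely the ones farther from $h$ on each side, so the already-built yield $\sett{I}_{\psi(h)}$ is an interval around $h$ with nothing of $\sett{D}$'s not-yet-attached material sandwiched inside it. Adjoining the next, strictly-farther modifiers on each side keeps the block an interval. I expect this is the delicate step: one has to be careful that ``closer modifiers attached first'' really does prevent a far-side modifier's yield from being straddled by a near-side one, and to handle both directions (left and right of $h$) simultaneously, which is cleanest if one tracks the current left and right frontier of $\sett{I}_{\psi(h)}$ and shows each event extends exactly one or both frontiers outward. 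Figure~\ref{fig:nonproj_nonnested_discontinuous} already isolates the two ways this can fail (non-nested order; non-projective arc), so the proof amounts to verifying that ruling out both failure modes is sufficient. Combining the two directions with Proposition~\ref{prop:weakorder} gives the claimed isomorphism, with the symbol count preserved since the arc labels are unchanged.
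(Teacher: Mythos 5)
Your proposal follows essentially the same route as the paper's proof: the forward direction uses Gaifman's projection result plus the spine/non-crossing argument for the nesting property, and the converse is a bottom-up induction showing every c-node created by Algorithm~\ref{alg:dtree_to_ctree_conversion} has a contiguous yield. The one step you flag as delicate (why adjoining a farther modifier leaves no gap) is exactly where the paper supplies the detail you defer: any position $d$ between $h$ and a newly attached modifier $m$ must descend from $h$ through a unique child $a$ lying between $h$ and $m$, and the nesting property forces $a$ to be attached no later than $m$, so $d$ is already in the yield.
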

\begin{proof}
We need to show that (i) Algorithm~\ref{alg:ctree_to_dtree_conversion}, when applied to a continuous c-tree $\sett{C}$, retrieves a head ordered d-tree $\sett{D}$ which is projective and has the nesting property, (ii) vice-versa for Algorithm~\ref{alg:dtree_to_ctree_conversion}. 
To see (i), note that the projectiveness of $\sett{D}$ is ensured by the well-known result of \citet{Gaifman1965} about the projection of continuous trees. To show that it satisfies the nesting property, note that 
nodes higher in the spine of a word $h$ are always attached by modifiers farther apart (otherwise edges in $\sett{C}$ would cross, which cannot happen for a continuous $\sett{C}$). 
To prove (ii), we use induction. We need to show that every created c-node in Algorithm~\ref{alg:dtree_to_ctree_conversion} has a contiguous span as yield. The base case (line~\ref{alg:dtree_to_ctree_conversion_basecase})  is trivial. Therefore, it suffices to show that 
in line~\ref{alg:dtree_to_ctree_conversion_cnode_creation}, assuming the yields of (the current) $\psi(h)$ and each $\psi(m)$ are contiguous spans, the union of these yields is also contiguous. 
Consider the node $v$ when these children have
been appended  
(line~\ref{alg:dtree_to_ctree_conversion_cnode_children}),  
and 
choose $m \in \bar{M}_h^j$ arbitrarily. We only need to show that for any $d$ between $h$ and $m$, $d$ belongs to the yield of $v$. 
Since $\sett{D}$ is projective and there is a d-arc between $h$ and $m$, we have that $d$ must descend from $h$. Furthermore, since projective trees cannot have crossing edges, we have that $h$ has a unique child $a$, also between $h$ and $m$, which is an ancestor of $d$ (or $d$ itself). Since $a$ is between $h$ and $m$, from the nesting property, 
we must have $\langle h,m,\ell\rangle \not\prec_h \langle h,a,\ell'\rangle$  
Therefore, 
since we are processing the modifiers in order, 
we have that $\psi(a)$ is already a descendent of $v$ 
after line~\ref{alg:dtree_to_ctree_conversion_cnode_children}, which implies that the yield of $\psi(a)$ (which must include $d$, since $d$ descends from $a$) must be contained in the yield of $v$. 
\end{proof}

Together, Propositions~\ref{prop:strictorder}--\ref{prop:continuous_case} have as corollary that 
nested-strictly ordered projective d-trees are in a one-to-one correspondence with 
binary continuous c-trees. 
The intuition is simple: if $\prec_h$ has the nesting property, 
then, at each point in time, all one needs to decide about the next event is whether to attach the closest available modifier on the \emph{left} or on the \emph{right}. 
This corresponds to choosing between left-branching or right-branching in a 
c-tree. 
While this is potentially interesting for most continuous c-parsers, which work with binarized c-trees  
when running the CKY algorithm, 
our c-parsers (to be described in \S\ref{sec:our_constituency_parsers}) 
do not require any binarization since they 
work with weakly-ordered d-trees, using Proposition~\ref{prop:weakorder}. 

\section{Reduction-Based Constituent Parsers}\label{sec:our_constituency_parsers}

We next show how to use the equivalence results 
obtained in the previous section to 
design c-parsers when only a 
trainable d-parser is available. 


Given a c-treebank provided as input, our procedure is outlined as follows:
\begin{enumeratesquish}
\item Convert the c-treebank to dependencies (Algorithm~\ref{alg:ctree_to_dtree_conversion}). 
\item Train a labeled d-parser on this treebank. 
\item For each test sentence, run the labeled d-parser and convert the predicted d-tree into a c-tree without unary nodes (Algorithm~\ref{alg:dtree_to_ctree_conversion}). 
\item Do post-processing to recover unaries.
\end{enumeratesquish}
The next subsections describe each of these steps. 
Along the way, we illustrate with 
experiments using 
the English Penn Treebank \citep{penn}, 
which we lexicalized by applying the head rules  of \citet{Collins1999}.%
\footnote{We train on \S{02--21}, use \S{22} for validation, and test on \S{23}. We predict automatic POS tags with \emph{TurboTagger}~\citep{Martins2013ACL}, with 10-fold jackknifing on the training set.} %

\subsection{Dependency Encoding}\label{sec:encoding}

The first step is to convert the c-treebank to head-ordered dependencies, which we do using Algorithm~\ref{alg:ctree_to_dtree_conversion}. %
If the original treebank has discontinuous c-trees, we end up with non-projective d-trees or with violations of the nested property, as established in Proposition~\ref{prop:continuous_case}. 
We handle this gracefully by training a non-projective d-parser in the subsequent stage (see \S\ref{sec:trainlabeleddepparser}). 
Note also that this conversion drops the unary nodes (a consequence of Proposition~\ref{prop:weakorder}). These nodes will be recovered in the last stage, as described in \S\ref{sec:unaryrecovery}. 

Since in this paper we are assuming that only an off-the-shelf d-parser is available, we need to convert head-ordered d-trees to plain d-trees. We do so 
by encoding the order information in the dependency labels. 
We tried two different strategies to do that. The first one is a \emphbf{direct encoding}, where we just append suffixes \#1, \#2, etc., as illustrated in Figure~\ref{fig:ctrees_dtrees}. 
A disadvantage is that the number of dependency labels may grow unbounded with the treebank size, since we may encounter complex substructures where the event sequences are long. 
The second strategy is a \emphbf{delta-encoding} scheme where, 
rather than writing the absolute indices in the dependency label, we write the \emph{differences} between consecutive indices.%
\footnote{For example, if $\#1, \#3, \#4$ and $\#2, \#3, \#3, \#5$ are respectively the sequence of indices 
from the head to the left and to the right, we encode these sequences 
as $\#1, \#2, \#1$ and $\#2, \#1, \#0, \#2$ (using 3 distinct indices instead of 5).} %
We used this strategy for the continuous treebanks only, 
whose d-trees are guaranteed to satisfy the nested property. 

For comparison, we implemented a third strategy replicating the encoding 
proposed by \citet{Hall2008}, which we call \emphbf{H\&N-encoding}. 
This scheme 
concatenates all the c-nodes' labels in the modifier's spine 
with the attachment position in the head's spine 
(for example, in Figure~\ref{fig:dtree_ctree_conversion}, if the modifier $m_2$ 
has a spine with nodes $X_1,X_2,X_3$, the generated d-label would be 
${\tt X_1|X_2|X_3\#2}$; 
our direct encoding scheme
generates ${\tt Z_2\#2}$ instead). %
Since their strategy encodes the entire spines into complex arc labels, many such 
labels will be generated, leading to slower runtimes and poorer generalization, as we will see. 


%

For the training portion of the English PTB, 
which contains 27 non-terminal symbols (excluding the POS tags), 
the direct encoding strategy yields 75 labels, 
while delta encoding yields 69 labels (averaging 2.6 indices per symbol). 
By contrast, the HN-encoding procedure yields 731 labels,  
more than 10 times as many. 
We later show (in 
Table~\ref{tab:results_english_exploratory}) 
that delta-encoding leads to a slightly 
higher c-parsing accuracy than direct encoding, %
and
that both strategies are considerably more accurate than 
H\&N-encoding. 

\subsection{Training the Labeled Dependency Parser}\label{sec:trainlabeleddepparser}

The next step is to train a labeled d-parser on the converted treebank. If we are doing continuous c-parsing, we train a projective d-parser; otherwise we train a non-projective one. 


In our experiments, we found it advantageous to 
perform labeled d-parsing in two stages, 
as done by \citet{McDonald2006CoNLL}: 
first, train an unlabeled d-parser; then, train a dependency labeler.%
\footnote{The reason why a two-stage approach is preferable  
is that one-shot d-parsers, for efficiency reasons,  use label features parsimoniously. 
However, for our reduction approach, 
the dependency labels are crucial and strongly interdependent, since they jointly encode the constituent structure.} %
Table~\ref{tab:results_english_exploratory} compares this approach against a one-shot strategy, experimenting with various off-the-shelf d-parsers: \emph{MaltParser} \citep{Nivre2007malt}, \emph{MSTParser} \citep{McDonald2005b}, 
\emph{ZPar} \citep{Zhang2011}, and \emph{TurboParser} \citep{Martins2013ACL}, all with the default settings. 
For \emph{TurboParser}, we used 
basic, standard and full models. 

Our separate d-labeler receives as input a backbone d-structure and predicts a label for each arc. 
For each head $h$, we decode the modifiers' labels independently from the other heads, using a simple sequence model, 
which contains features of the form $\vectsymb{\phi}(h,m,\ell)$ and $\vectsymb{\phi}(h,m,m',\ell,\ell')$, where $m$ and $m'$ are two consecutive modifiers (either on the same side or on opposite sides of the head) and $\ell$ and $\ell'$ are their labels. 
We used the same arc label features $\vectsymb{\phi}(h,m,\ell)$ as 
\emph{TurboParser}. 
For $\vectsymb{\phi}(h,m,m',\ell,\ell')$, 
we use the POS triplet $\langle p_h, p_m, p_{m'} \rangle$, 
plus unilexical versions of this triplet, 
where each of the three POS is replaced by the word form. 
Both features are conjoined with the label pair $\ell$ and $\ell'$. 
Decoding under this model can be done by running the Viterbi algorithm independently for each head. The runtime is almost negligible compared with the time to parse: 
it took 2.1 seconds to process PTB~\S{22}, 
a fraction of about 5\% of the total runtime. 
%

\begin{table*}
\begin{center}
\begin{tabular}{@{\hskip 0pt}l@{\hskip 5pt}c@{\hskip 10pt}c@{\hskip 10pt}c@{\hskip 1pt}r@{\hskip 0pt}}
\noalign{\smallskip}\hline\noalign{\smallskip}
{\bf Dependency Parser} & {\bf UAS} & {\bf LAS} & {\bf F$_1$}  & {\bf \#Toks/s.}\\
\hline\noalign{\smallskip}
MaltParser & 90.93 & 88.95  & 86.87 & 5,392 \\
MSTParser & 92.17 & 89.86 & 87.93 & 363 \\
ZPar & 92.93 & 91.28  &  89.50 & 1,022 \\
TP-Basic & 92.13 & 90.23 &  87.63 & 2,585 \\
TP-Standard & 93.55 & 91.58 & 90.41 & 1,658 \\
TP-Full & 93.70 & 91.70 &  90.53 & 959 \\
\hline\noalign{\smallskip}
TP-Full + labeler, H\&N encoding & 93.80  & 87.86 & 89.39 & 871  \\
TP-Full + labeler, direct encoding & 93.80  & 91.99 & 90.89  & 912 \\
{\bf TP-Full + labeler, delta encoding} & {\bf 93.80}  & {\bf 92.00} &  {\bf 90.94} & 912 \\
\hline
\end{tabular}
\caption{Results on English PTB~\S{22} achieved by various d-parsers and encoding strategies. For dependencies, we report unlabeled/labeled attachment scores (UAS/LAS), excluding punctuation. For constituents, we show F$_1$-scores (without punctuation and root nodes), as provided by \url{EVALB} \citep{Black1992}. 
We report total parsing speeds in tokens per second (including time spent on pruning, decoding, and feature evaluation), measured on a Intel Xeon processor @2.30GHz.
\label{tab:results_english_exploratory}}
\end{center}
\end{table*}


\if 0
\begin{table}
\begin{center}
\scriptsize
\begin{tabular}{lrr}
\noalign{\smallskip}\hline\noalign{\smallskip}
Component & Accuracy & Time (tok./sec.)\\
\noalign{\smallskip}\hline\noalign{\smallskip}
Dependency Parser&   93.80 UAS & 976\\
Dependency Labeler  &  92.00 LAS & 18,676 \\
Unary Predictor &   74.38/99.43 oracle $F_1$ & 54,655 \\
\noalign{\smallskip}\hline\noalign{\smallskip}
Total & 90.94 $F_1$ & 912 \\ 
\noalign{\smallskip}\hline\noalign{\smallskip}
\end{tabular}
\caption{Individual accuracy and runtime of each component of our system on English PTB~\S{22}.\afm{are these in the dev-set too? do the test-set runtimes look better? how do they compare with other c-parsers?}\label{tab:runtime_components}}
\end{center}
\end{table}
\fi

\subsection{Decoding into Unaryless Constituents}

After training the labeled d-parser, we can run it on the test data. 
Then, we need to convert the predicted d-tree into a c-tree without unaries. 

To accomplish this step, we first need to recover, for each head $h$, the weak order of its modifiers $M_h$. We do this by looking at the predicted dependency labels, extract the event indices $j$, and use them to build and sort the equivalent classes $\{\bar{M}_h^j\}_{j=1}^J$. If two modifiers have the same index $j$, we force them to have consistent labels (by always choosing the label of the modifier which is the closest to the head). For continuous c-parsing, we also 
decrease the index $j$ of the modifier closer to the head as much as necessary to make sure 
that the nesting property
holds. In PTB~\S{22}, these corrections were necessary only for 0.6\% of the tokens. 
Having done this, we use Algorithm~\ref{alg:dtree_to_ctree_conversion} to obtain a predicted c-tree without unary nodes. 


\subsection{Recovery of Unary Nodes}\label{sec:unaryrecovery}

Finally, the last stage is to recover the unary nodes.  
Given a unaryless c-tree as input, we 
predict unaries by running independent multi-class classifiers at each node in the tree (a simple unstructured task). 
Each class is either {\sc null} (in which case no unary node is appended to the current node) 
or a concatenation 
of 
unary node labels (e.g., {\tt S->ADJP} for a node {\tt JJ}); 
we obtained 64 classes by processing the training 
sections of the PTB, the fraction of unary nodes 
being about 11\% of the total number of non-terminal nodes. %
To reduce complexity, for each node symbol we 
only consider classes that have been observed for that symbol in 
the training data. 
In PTB~\S{22}, we obtained an average of 9.9 candidate labels 
per node occurrence. 


These classifiers are trained on the original c-treebank, stripping off unary nodes and trained to recover those nodes. 
We used the following features (conjoined with the class and with a flag indicating if the node is a pre-terminal):
\begin{itemizesquish}
\item The production rules above and beneath the node (\emph{e.g.}, {\tt S->\underline{NP} VP} and {\tt \underline{NP}->DT NN}); 
\item The node's label, alone and conjoined with the parent's label or the left/right sibling's label;
\item The leftmost and rightmost word/lemma/POS tag/morpho-syntactic tags in the node's yield;
\item If the left/right node is a pre-terminal, the word/lemma/morpho-syntactic tags beneath.
\end{itemizesquish} 
This is a relatively easy task: when 
gold unaryless c-trees provided as input, we obtain 
an EVALB F$_1$-score of 99.43\%. 
This large figure is explained by the fact that 
there are few unary nodes in the gold data, 
so 
this module does not impact the final parser as much as the d-parser. 
Being a lightweight unstructured task, this step 
took only 0.7 seconds to run on PTB~\S{22}, 
representing 
a tiny fraction (less than 2\%) of the total runtime. 

Table~\ref{tab:results_english_exploratory} reports the 
accuracies obtained with the d-parser followed by the unary predictor. 
Since two-stage TP-Full with delta-encoding is the best strategy, 
we use this configuration in the subsequent experiments. 

%



\section{Experiments}\label{sec:experiments}



We now compare 
our reduction-based parsers 
with other state-of-the-art c-parsers 
in a variety of treebanks, both continuous and discontinuous. 

\subsection{Results on the English~PTB}\label{sec:experiments_ptb}


Table \ref{tab:result1} shows the accuracies and speeds 
on the English PTB~\S{23}. %
We can see that our simple reduction-based c-parser 
surpasses the three Stanford parsers \cite[and Stanford Shift-Reduce]{Klein2003,Socher2013},  
and 
is on par with the 
Berkeley parser 
\citep{Petrov2007NAACL}, while being more than 5 times faster. 
The best 
supervised competitor is the 
recent shift-reduce parser 
of \citet{Zhu2013}, which achieves slightly better accuracy and speed. 
Our technique has the advantage of being flexible: since the time for d-parsing is the dominating factor (see~\S\ref{sec:unaryrecovery}), 
plugging a faster d-parser automatically yields a faster c-parser. 
Orthogonal techniques, 
such as semi-supervised training and 
reranking, can also 
be applied 
to our parser to boost its performance. 


\begin{table}
\begin{center}
\small
\begin{tabular}{@{\hskip 0pt}l@{\hskip 8pt}c@{\hskip 8pt}c@{\hskip 8pt}c@{\hskip 1pt}r@{\hskip 0pt}}
\hline\noalign{\smallskip}
{\bf Parser}
& {\bf LR} & {\bf LP} 
& {\bf F1} & {\bf \#Toks/s.} \\
\hline\noalign{\smallskip}
\citet{Charniak2000} & 89.5 & 89.9 & 89.5 & -- \\
\citet{Klein2003} & 85.3 & 86.5 & 85.9 & 143   \\
\citet{Petrov2007NAACL}  & 90.0 & 90.3 & 90.1 & 169   \\
\citet{Carreras2008} & 90.7 & 91.4 & 91.1 & --   \\
\citet{Zhu2013}& 90.3 & 90.6 & 90.4 &  1,290  \\
Stanford Shift-Reduce (2014) & 89.1 & 89.1 & 89.1 &  655  \\
\citet{Hall2014} & 88.4 & 88.8 & 88.6 &  12  \\
{\bf This work} & {89.9} & {90.4} & {90.2} & 957 \\
\hline\noalign{\smallskip}
\citet{Charniak2005}$^*$ &  91.2 &  91.8 & 91.5 & 84   \\
\citet{Socher2013}$^*$  & 89.1 & 89.7 & 89.4 & 70   \\
\hline
\end{tabular}
\caption{Results on the English PTB~\S{23}. All 
systems reporting runtimes 
were run on the same machine. 
Marked as $^*$ are reranking and semi-supervised c-parsers. 
\label{tab:result1}}
\end{center}
\end{table}


\subsection{Results on the SPMRL Datasets}\label{sec:experiments_spmrl}

We experimented with 
datasets for eight morphologically rich languages, from the 
SPMRL14 shared task~\citep{Seddah2014}.%
\footnote{We left out the Arabic dataset for licensing reasons.} %
We used the official training, development and test sets with the provided predicted POS tags, 
and different lexicalization rules for each language. For French and German we used the head rules detailed in \citet{Johansen2004} and \citet{Rehbein2009}, respectively. 
For Basque, Hungarian and Korean, we always take the rightmost modifier as head-child node. For Hebrew and Polish we use the leftmost modifier instead. For Swedish we 
induce head rules from the provided dependency treebank, as described in~\citet{Versley2014Arxiv}. These choices were based on dev-set experiments.

Table \ref{tab:result2} shows the results. %
For all languages except French, 
our system outperforms the Berkeley parser \citep{Petrov2007NAACL}, 
with or without prescribed POS tags.  
Our average F$_1$-scores are superior to the 
best single parser%
\footnote{By ``single parser'' we mean a system which does not use ensemble or reranking techniques.} %
participating in the shared task \citep{Crabbe2014}, 
and to the system of \citet{Hall2014}, 
achieving the best results for 4 out of 8 languages. 


\begin{table*}
\begin{center}
\begin{small}
\begin{tabular}{@{\hskip 0pt}l@{\hskip 8pt}c@{\hskip 8pt}c@{\hskip 8pt}c@{\hskip 8pt}c@{\hskip 8pt}c@{\hskip 8pt}c@{\hskip 8pt}c@{\hskip 8pt}c@{\hskip 8pt}c@{\hskip 0pt}}
\hline\noalign{\smallskip}
{\bf Parser} & {\bf Basque} & {\bf French} & {\bf German} & {\bf Hebrew} & {\bf Hungar.} & {\bf Korean} & {\bf Polish} & {\bf Swedish} & {\bf Avg.}  \\
\hline\noalign{\smallskip}
Berkeley & 70.50 & \textbf{80.38} & 78.30 & 86.96 & 81.62 & 71.42 & 79.23 & 79.19 & 78.45 \\
Berkeley Tagged 
& 74.74 & 79.76 & 78.28 & 85.42 & 85.22 & 78.56 & 86.75 & 80.64 & 81.17  \\
\citet{Hall2014} & 83.39 & 79.70 & 78.43 & 87.18 & \textbf{88.25} & \textbf{80.18} & 90.66 & 82.00 & 83.72 \\
\citet{Crabbe2014} & 85.35 & 79.68 & 77.15 & 86.19 & 87.51 & 79.35 & \textbf{91.60} & 82.72 & 83.69  \\
{\bf This work} & \textbf{85.90} & 78.75 & \textbf{78.66} & \textbf{88.97} & 88.16 & 79.28 & 91.20 & \textbf{82.80} & \textbf{84.22}  \\
\hline\noalign{\smallskip}
\citet{Bjorkelund2014} & 88.24 & 82.53 & 81.66 & 89.80 & 91.72 & 83.81 & 90.50 & 85.50 & 86.72  \\
\hline
\end{tabular}
\caption{F$_1$-scores  on eight treebanks of the SPMRL14 shared task, computed with the provided {\tt EVALB\_SPMRL} tool, 
(\url{pauillac.inria.fr/~seddah/evalb_spmrl2013.tar.gz}) 
which takes into account all tokens except root nodes. 
Berkeley Tagged is a version of \citet{Petrov2007NAACL} using the predicted POS tags provided by the organizers. \citet{Crabbe2014} is the best non-reranking system in the shared task, and \citet{Bjorkelund2014} the ensemble and reranking-based system which won the official task. 
\label{tab:result2}}
\end{small}
\end{center}
\end{table*}

\subsection{Results on the Discontinuous Treebanks}\label{sec:experiments_disco}

Finally, we experimented on two widely-used discontinuous German treebanks: 
TIGER \citep{tiger} and NEGRA \citep{negra}. For the former, we used two different splits: TIGER-SPMLR, provided in the SPMRL14 shared task; and TIGER-H\&N, used by~\citet{Hall2008}. For NEGRA, we used the standard splits. 
In these experiments, we skipped the unary recovery stage, since very few unary nodes exist in the data.%
\footnote{NEGRA has no unaries; for the TIGER-SPMRL and H\&N dev-sets, the fraction of unaries is 1.45\% and 1.01\%.} %
For the TIGER-SPMRL dataset, we used the predicted POS tags provided in the shared task. For TIGER-H\&N and NEGRA, we predicted POS tags with \emph{TurboTagger}.   
The treebanks were lexicalized using the head-rule sets of \citet{Rehbein2009}. 
For comparison to related work, a sentence length cut-off of 30, 40 and 70 was applied during the evaluation. 

\if 0
The evaluation metrics were obtained with the Disco-DOP evaluator.%
\footnote{\url{http://discodop.readthedocs.org}} %
For the assessment of the TIGER-H\&N and NEGRA treebanks, the root nodes and all punctuation (including brackets) are not considered in the final score; and, for the TIGER-SPMRL dataset, only root nodes are removed from the evaluation. 
\fi 

Table~\ref{tab:result4} shows 
that our approach outperforms  all the competitors considerably, 
achieving state-of-the-art accuracies for both datasets. 
The best competitor, 
\citet{Cranenburgh2013}, 
is more than 3 points behind, both in TIGER-H\&N and in NEGRA. 
Our reduction-based parsers are also much faster: 
\citet{Cranenburgh2013} report 3 hours to parse NEGRA with $L<40$. 
Our system parses all NEGRA sentences 
(regardless of length) in 27.1 seconds, 
which corresponds to a rate of 618 toks/s. 
This approaches the speed of the 
easy-first system of \citet{Versley2014SPMRL}, 
who 
reports runtimes in the range 670--920 toks/s., 
but is much less accurate. 


\begin{table*}
\begin{center}
\begin{tabular}{lcccccc}
\noalign{\smallskip}\hline\noalign{\smallskip}
\multicolumn{7}{c}{\bf TIGER-SPMRL}\\
\noalign{\smallskip}\hline\noalign{\smallskip}
 & \multicolumn{2}{c}{$L\leq40$} & \multicolumn{2}{c}{$L\leq70$} &  \multicolumn{2}{c}{all} \\
{\bf Parser}  & F$_1$ & EX & F$_1$ & EX & F$_1$ & EX \\
\noalign{\smallskip}\hline\noalign{\smallskip}
\citet{Versley2014Arxiv}, \emph{gold} & 78.34 & 42.78 & 76.46 & 41.05 & 76.11 & 40.94 \\
\textbf{This work, \emph{gold}} & \textbf{82.56} & \textbf{45.25} &\textbf{80.98} & \textbf{43.44} &  \textbf{80.62} & \textbf{43.32}    \\
\hline\noalign{\smallskip}
\citet{Versley2014Arxiv}, \emph{pred} & -- & -- & 73.90 & 37.00 & -- & -- \\
\textbf{This work, \emph{pred}} & \textbf{79.57} & \textbf{40.39} & \textbf{77.72} & \textbf{38.75} &  \textbf{77.32} & \textbf{38.64}    \\
\noalign{\smallskip}\hline
\noalign{\smallskip}\hline\noalign{\smallskip}
\multicolumn{7}{c}{\bf TIGER-H\&N}\\
\noalign{\smallskip}\hline\noalign{\smallskip}
 & \multicolumn{2}{c}{$L\leq30$} &  \multicolumn{2}{c}{$L\leq40$} & \multicolumn{2}{c}{all} \\
{\bf Parser} & F$_1$ & EX & F$_1$ & EX & F$_1$ & EX \\
\noalign{\smallskip}\hline\noalign{\smallskip}
\citet{Hall2008}, \emph{gold} & -- & -- & 79.93 & 37.78 & -- & -- \\
\citet{Versley2014SPMRL}, \emph{gold} & 76.47 & 40.61 & 74.23 & 37.32 & -- & -- \\
 \textbf{This work, \emph{gold}} & \textbf{86.63} & \textbf{54.88} &  \textbf{85.53} & \textbf{51.21}  &  \textbf{84.22} & \textbf{49.63}  \\
\hline\noalign{\smallskip}
 \citet{Hall2008}, \emph{pred} & -- & -- & 75.33 & 32.63 & -- & -- \\
\citet{Cranenburgh2013}, \emph{pred} & -- & -- & 78.8-- & 40.8-- & -- & -- \\
 \textbf{This work, \emph{pred}} & \textbf{83.94} & \textbf{49.54} &  \textbf{82.57} & \textbf{45.93}  &  \textbf{81.12} & \textbf{44.48}  \\
  \noalign{\smallskip}\hline
  \noalign{\smallskip}\hline\noalign{\smallskip}
\multicolumn{7}{c}{\bf NEGRA}\\
\noalign{\smallskip}\hline\noalign{\smallskip}
 & \multicolumn{2}{c}{$L\leq30$} &  \multicolumn{2}{c}{$L\leq40$} & \multicolumn{2}{c}{all} \\
{\bf Parser}  & F$_1$ & EX & F$_1$ & EX & F$_1$ & EX \\
\noalign{\smallskip}\hline\noalign{\smallskip}
\citet{Maier2012}, \emph{gold} & 74.5-- & -- & -- & -- & -- & -- \\
\citet{Cranenburgh2012}, \emph{gold} &  -- & -- & 72.33 & 33.16 & 71.08 & 32.10 \\
\citet{Kallmeyer2013}, \emph{gold} & 75.75 & -- & -- & -- & -- & -- \\
\citet{Cranenburgh2013}, \emph{gold} & -- & -- & 76.8-- & 40.5-- & -- & -- \\
 \textbf{This work, \emph{gold}} & \textbf{82.56} & \textbf{52.13} &  \textbf{81.08} & \textbf{48.04}  &  \textbf{80.52} & \textbf{46.70} \\
\hline\noalign{\smallskip}
\citet{Cranenburgh2013}, \emph{pred} & -- & -- & 74.8-- & 38.7-- & -- & -- \\
 \textbf{This work, \emph{pred}} & \textbf{79.63} & \textbf{48.43} &  \textbf{77.93} & \textbf{44.83}  &  \textbf{76.95} & \textbf{43.50}  \\
\noalign{\smallskip}\hline
\end{tabular}
\caption{Results on TIGER and NEGRA test partitions, with \emph{gold} and \emph{predicted} POS tags. Shown are F$_1$ and exact match scores (EX), computed 
with the Disco-DOP evaluator (\url{discodop.readthedocs.org}), 
ignoring root nodes and, for TIGER-H\&N and NEGRA, punctuation tokens.}
\end{center}
\label{tab:result4}
\end{table*}

\if 0
\begin{table*}
\begin{center}
\begin{tabular}{rcc}
\hline\noalign{\smallskip}
{\bf TIGER-SPMRL} & \multicolumn{1}{c}{F$_1$ / EX ($L\leq70$)} &  \multicolumn{1}{c}{F$_1$ / EX (all)} \\
\hline\noalign{\smallskip}
{V14b, \emph{gold}\quad} & 76.46 / 41.05 & 76.11 / 40.94 \\
\textbf{Ours, \emph{gold}\quad} & \textbf{80.98} / \textbf{43.44} &  \textbf{80.62} / \textbf{43.32}    \\
\hline\noalign{\smallskip}
{V14b, \emph{pred}\quad} & 73.90 / 37.00 & -- / -- \\
\textbf{Ours, \emph{pred}\quad} & \textbf{77.72} / \textbf{38.75} &  \textbf{77.32} / \textbf{38.64}    \\
\hline
\end{tabular}
\begin{tabular}{rcc}
\hline\noalign{\smallskip}
{\bf TIGER-H\&N}  &  \multicolumn{1}{c}{F$_1$ / EX ($L\leq40$)} & \multicolumn{1}{c}{F$_1$ / EX (all)} \\
\hline\noalign{\smallskip}
{HN08, \emph{gold}\,\,\,} & 79.93 / 37.78 & -- / -- \\
{V14a, \emph{gold}\,\,\,} & 74.23 / 37.32 & -- / -- \\
 \textbf{Ours, \emph{gold}\,\,\,} &  \textbf{85.53} / \textbf{51.21}  &  \textbf{84.22} / \textbf{49.63}  \\
\hline\noalign{\smallskip}
{HN08, \emph{pred}\,\,\,} & 75.33 / 32.63 & -- / -- \\
{CB13, \emph{pred}\,\,\,} & 78.8-- / 40.8-- & -- / -- \\
  \textbf{Ours, \emph{pred}\,\,\,} &   \textbf{82.57} / \textbf{45.93}  &  \textbf{81.12} / \textbf{44.48}  \\
\hline
\end{tabular}
\begin{tabular}{@{\hskip 0pt}r@{\hskip 4pt}c@{\hskip 4pt}c@{\hskip 4pt}c@{\hskip 0pt}}
\hline\noalign{\smallskip}
{\bf NEGRA}  & {F$_1$/EX ($L\leq30$)} &  {F$_1$/EX ($L\leq40$)} & {F$_1$/EX (all)} \\
\hline\noalign{\smallskip}
{M12}, \emph{gold} & 74.5-- / --{\qquad} & -- / -- & -- / -- \\
{C12}, \emph{gold} &  -- / -- & 72.33 / 33.16 & 71.08 / 32.10 \\
{KM13}, \emph{gold} & 75.75 / --{\qquad} & -- / -- & -- / -- \\
{CB13}, \emph{gold} & -- / -- & 76.8-- / 40.5-- & -- / -- \\
  \textbf{Ours, \emph{gold}} & \textbf{82.56} / \textbf{52.13} &  \textbf{81.08} / \textbf{48.04}  &  \textbf{80.52} / \textbf{46.70}  \\
\hline\noalign{\smallskip}
{CB13}, \emph{pred} & -- / -- & 74.8-- / 38.7-- & -- / -- \\
  \textbf{Ours, \emph{pred}} & \textbf{79.63} / \textbf{48.43} &  \textbf{77.93} / \textbf{44.83}  &  \textbf{76.95} / \textbf{43.50}  \\
\hline
\end{tabular}
\caption{Results on TIGER and NEGRA test partitions, with \emph{gold} and \emph{predicted} POS tags. Shown are F$_1$ and exact match scores (EX), computed 
with the Disco-DOP evaluator 
ignoring root nodes and, for TIGER-H\&N and NEGRA, punctuation tokens.
The baselines are  
Hall and Nivre 2008\nocite{Hall2008} (HN08),
Maier et al. 2012\nocite{Maier2012} (M12),
van Cranenburgh 2012\nocite{Cranenburgh2012} (C12), 
Kallmeyer and Maier 2013\nocite{Kallmeyer2013} (KM13),
van Cranenburgh and Bod 2013\nocite{Cranenburgh2013} (CB13),
and 
Versley 2014a, 2014b\nocite{Versley2014SPMRL,Versley2014Arxiv} (V14a, V14b). 
%
\label{tab:result4}}
\end{center}
\end{table*}

\fi

\if 0
\begin{table}[t]
\begin{center}
\small
\begin{tabular}{@{\hskip 0pt}r@{\hskip 4pt}c@{\hskip 4pt}c@{\hskip 4pt}c@{\hskip 0pt}}
\hline\noalign{\smallskip}
{\bf NEGRA}  & {F$_1$/EX ($L\leq30$)} &  {F$_1$/EX ($L\leq40$)} & {F$_1$/EX (all)} \\
\hline\noalign{\smallskip}
{M12}, \emph{gold} & 74.5-- / --{\qquad} & -- / -- & -- / -- \\
{C12}, \emph{gold} &  -- / -- & 72.33 / 33.16 & 71.08 / 32.10 \\
{KM13}, \emph{gold} & 75.75 / --{\qquad} & -- / -- & -- / -- \\
{CB13}, \emph{gold} & -- / -- & 76.8-- / 40.5-- & -- / -- \\
  \textbf{Ours, \emph{gold}} & \textbf{82.56} / \textbf{52.13} &  \textbf{81.08} / \textbf{48.04}  &  \textbf{80.52} / \textbf{46.70}  \\
\hline\noalign{\smallskip}
{CB13}, \emph{pred} & -- / -- & 74.8-- / 38.7-- & -- / -- \\
  \textbf{Ours, \emph{pred}} & \textbf{79.63} / \textbf{48.43} &  \textbf{77.93} / \textbf{44.83}  &  \textbf{76.95} / \textbf{43.50}  \\
\hline
\end{tabular}
\caption{Results on the NEGRA test partition, 
computed with the Disco-DOP evaluator ignoring root and punctuation. 
M12 denotes \citet{Maier2012}, C12 is \citet{Cranenburgh2012}, and KM13 is \citet{Kallmeyer2013}. 
\label{tab:result5}}
\end{center}
\end{table}
\fi

\if 0
\begin{table*}
\begin{center}
\begin{scriptsize}
\begin{tabular}{lcccccc}
\noalign{\smallskip}\hline\noalign{\smallskip}
\multicolumn{7}{c}{TIGER-SPMRL TEST SET}\\
\noalign{\smallskip}\hline\noalign{\smallskip}
 & \multicolumn{2}{c}{$l\leq40$} & \multicolumn{2}{c}{$l\leq70$} &  \multicolumn{2}{c}{all} \\
Parser & F$_1$ & EX & F$_1$ & EX \\
\noalign{\smallskip}\hline\noalign{\smallskip}
\citet{Versley2014Arxiv}, gold & 78.34 & 42.78 & 76.46 & 41.05 & 76.11 & 40.94 \\
\textbf{This work, gold} & \textbf{82.56} & \textbf{45.25} &\textbf{80.98} & \textbf{43.44} &  \textbf{80.62} & \textbf{43.32}    \\
\citet{Versley2014Arxiv}, pred & - & - & 73.90 & 37.00 & - & - \\
\textbf{This work, pred} & \textbf{79.57} & \textbf{40.39} & \textbf{77.72} & \textbf{38.75} &  \textbf{77.32} & \textbf{38.64}    \\
\noalign{\smallskip}\hline\noalign{\smallskip}
\multicolumn{7}{c}{TIGER-H\&N TEST SET}\\
\noalign{\smallskip}\hline\noalign{\smallskip}
 & \multicolumn{2}{c}{$l\leq30$} &  \multicolumn{2}{c}{$l\leq40$} & \multicolumn{2}{c}{all} \\
Parser & F$_1$ & EX & F$_1$ & EX & F$_1$ & EX \\
\noalign{\smallskip}\hline\noalign{\smallskip}
\citet{Hall2008}, gold & - & - & 79.93 & 37.78 & - & - \\
\citet{Maier2010}, gold$^*$ & 75.57 & 31.80 & - & - & - & - \\
\citet{Versley2014SPMRL}, gold & 76.47 & 40.61 & 74.23 & 37.32 & - & - \\
 \textbf{This work, gold} & \textbf{86.63} & \textbf{54.88} &  \textbf{85.53} & \textbf{51.21}  &  \textbf{84.22} & \textbf{49.63}  \\
 \citet{Hall2008}, pred & - & - & 75.33 & 32.63 & - & - \\
\citet{Cranenburgh2013}, pred & - & - & 78.8- & 40.8- & - & - \\
  \textbf{This work, pred} & \textbf{83.94} & \textbf{49.54} &  \textbf{82.57} & \textbf{45.93}  &  \textbf{81.12} & \textbf{44.48}  \\
  \noalign{\smallskip}\hline\noalign{\smallskip}
\multicolumn{7}{c}{NEGRA TEST SET}\\
\noalign{\smallskip}\hline\noalign{\smallskip}
 & \multicolumn{2}{c}{$l\leq30$} &  \multicolumn{2}{c}{$l\leq40$} & \multicolumn{2}{c}{all} \\
Parser & F$_1$ & EX & F$_1$ & EX & F$_1$ & EX \\
\noalign{\smallskip}\hline\noalign{\smallskip}
\citet{Maier2010}, gold$^*$ & 74.04 & 33.43 & - & - & - & - \\
\citet{Maier2012}, gold & 74.5- & - & - & - & - & - \\
\citet{Cranenburgh2012}, gold &  - & - & 72.33 & 33.16 & 71.08 & 32.10 \\
\citet{Kallmeyer2013}, gold & 75.75 & - & - & - & - & - \\
\citet{Cranenburgh2013}, gold & - & - & 76.8- & 40.5- & - & - \\
  \textbf{This work, gold} & \textbf{82.56} & \textbf{52.13} &  \textbf{81.08} & \textbf{48.04}  &  \textbf{80.52} & \textbf{46.70}  \\
\citet{Cranenburgh2013}, pred & - & - & 74.8- & 38.7- & - & - \\
  \textbf{This work, pred} & \textbf{79.63} & \textbf{48.43} &  \textbf{77.93} & \textbf{44.83}  &  \textbf{76.95} & \textbf{43.50}  \\
\noalign{\smallskip}\hline
\end{tabular}
\caption{Comparison to related work on the TIGER-SPMRL, TIGER-H\&N and NEGRA treebanks with predicted (pred) and gold part-of-speech tags. $^*$ includes root node in the evaluation.\danny{Speed on test sets: TIGER-SPMRL(92004 tokens, 5000 sent.) 28.92 sent./s., TIGER-H\&N (88515 tokens, 5047 sent.) 38.64 and NEGRA (16742 tokens, 1000 sent.) 36.93 sent./s.}\label{tab:result4}}
\end{scriptsize}
\end{center}
\end{table*}
\fi




\section{Related Work}\label{sec:related_work}

Conversions between constituents and dependencies  
have been considered by \citet{Marneffe2006} in the forward direction, 
and by \citet{Collins1999ACL} and \citet{Xia2001} in the backward direction, 
toward the construction of 
multi-representational treebanks \citep{Xia2008}.  
This prior work aimed at linguistically sound conversions,  
involving grammar-specific transformation rules to handle the kind of ambiguities expressed in Figure~\ref{fig:ctrees_ambiguity}. 
%
Our work differs in that 
we are not concerned about 
the linguistic plausibility of our conversions, 
but only with the formal aspects that underlie 
the two representations. 

The work most related to ours is \citet{Hall2008}, 
who also convert dependencies to constituents   
to prototype a c-parser for German. 
Their encoding strategy is compared to ours in \S\ref{sec:encoding}:  
they encode the entire spines 
into the dependency labels, 
which become rather complex and numerous. 
A similar strategy has been used by \citet{Versley2014SPMRL} for discontinuous c-parsing. 
Both are largely outperformed by our system, as shown in \S\ref{sec:experiments_disco}. 
The crucial difference is that we encode only the top node's label and its position in the spine---besides being 
a much lighter representation, ours has an interpretation as a weak ordering,  leading to the isomorphisms expressed in 
Propositions~\ref{prop:strictorder}--%
\ref{prop:continuous_case}. 

Joint constituent and dependency parsing have been 
tackled by \citet{Carreras2008} and \citet{Rush2010}, but the resulting parsers, while accurate, are more expensive than a single c-parser. 
Very recently, 
\citet{Kong2015} proposed a much cheaper pipeline in which 
d-parsing is performed first, followed 
by a c-parser constrained to be consistent with the predicted d-structure. 
Our work differs in which 
we do not need to run a 
c-parser in the second stage---instead, 
the d-parser already stores constituent information in the arc labels, and the only necessary post-processing is to recover unary nodes. 
Another advantage of our method is that it can be readily used for discontinuous parsing, while their 
constrained CKY algorithm 
can only produce continuous parses.

%
%
%
%
%

\section{Conclusion}
We proposed a reduction technique that allows to 
implement a constituent parser when only a dependency parser is given. 
The technique is applicable to any dependency parser, regardless its nature or kind. 
This reduction was accomplished by endowing 
dependency trees with a weak order relation, 
and showing that the resulting class of 
head-ordered dependency trees is isomorphic to 
constituent trees. 
We have shown empirically that the proposed reduction, while simple, leads to 
highly-competitive constituent parsers 
for English and for eight morphologically rich languages; 
and that it outperforms the current state of the art in discontinuous parsing of German.

\section*{Acknowledgments}
We thank Slav Petrov, Lingpeng Kong and Carlos G\'omez-Rodr\'iguez for their comments and suggestions. 
This research has been partially funded by the Spanish Ministry of Economy and Competitiveness and FEDER (project TIN2010-18552-C03-01), 
Ministry of Education (FPU Grant Program) and Xunta de Galicia (projects R2014/029 and R2014/034). 
A.~M.~was supported  
by the EU/FEDER programme, 
QREN/POR Lisboa (Portugal), under the 
Intelligo 
project 
(contract 2012/24803), 
and by the FCT grant UID/EEA/50008/2013. 

%
%


\bibliographystyle{acl}
\bibliography{acl2015}

\end{document}